  \providecommand\BibTeX{{%
    \normalfont B\kern-0.5em{\scshape i\kern-0.25em b}\kern-0.8em\TeX}}}
\newtheorem{defn}{Definition}[]
\newtheorem{lem}{Lemma}[]
\newcommand{\foreps}{$episode=1$ \KwTo $M$}
\newcommand{\fort}{$t=1$ \KwTo $T$}
\newcommand{\forn}{$n=1$ \KwTo $N$}
\begin{document}

\title{Learning an Interpretable Traffic Signal Control Policy}  


\author{James Ault}
\affiliation{%
  \institution{Texas A\&M University}
  \city{College Station} 
  \state{Texas} 
  \postcode{77840}
}
\email{jault@tamu.edu}
\author{Josiah P. Hanna}
\affiliation{%
  \institution{University of Edinburgh}
  \city{Edinburgh} 
  \country{U.K.}
}
\email{josiah.hanna@ed.ac.uk}
\author{Guni Sharon}
\affiliation{%
  \institution{Texas A\&M University}
  \city{College Station} 
  \state{Texas} 
  \postcode{77840}
}
\email{guni@tamu.edu}

\begin{abstract}
  Signalized intersections are managed by controllers that assign right of way (green, yellow, and red lights) to non-conflicting directions. Optimizing the actuation policy of such controllers is expected to alleviate traffic congestion and its adverse impact. Given such a safety-critical domain, the affiliated actuation policy is required to be interpretable in a way that can be understood and regulated by a human. This paper presents and analyzes several on-line optimization techniques for tuning interpretable control functions. Although these techniques are defined in a general way, this paper assumes a specific class of interpretable control functions (polynomial functions) for analysis purposes. We show that such an interpretable policy function can be as effective as a deep neural network for approximating an optimized signal actuation policy. We present empirical evidence that supports the use of value-based reinforcement learning for on-line training of the control function. Specifically, we present and study three variants of the \textit{Deep Q-learning} algorithm that allow the training of an interpretable policy function. Our \textit{Deep Regulatable Hardmax Q-learning} variant is shown to be particularly effective in optimizing our interpretable actuation policy, resulting in up to 19.4\% reduced vehicles delay compared to commonly deployed actuated signal controllers.
\end{abstract}

%

\keywords{deep reinforcement learning; interpretable; intelligent transportation} 

\maketitle


\section{Introduction}

Travel time studies in urban areas show that 12--55\% of commute travel time is due to delays induced by signalized intersections (stopped or approach delay)~\cite{levinson1998speed,tirachini2013estimation}. Hence, optimized signal controllers have the potential of reducing commute time, traffic congestion, emissions, and fuel consumption, while requiring minimal infrastructure changes. 

Recent publications~\cite{laval2019large,li2016traffic,van2016coordinated} 
proposed to utilize state--of--the--art reinforcement learning algorithms for online optimization of signal controllers.
Such previous work showed a potential reduction of up to 73\% in vehicle delays when compared to fixed--time actuation~\cite{mousavi2017traffic}. 
Despite showing compelling empirical results, the controllers defined in such previous work have little applicability in the real world since their underlying control function is based on a deep neural network (DNN). While providing flexible and powerful function approximations, DNNs lack an interpretable inference process~\cite{samek2017explainable} which might prevent the implementation of related controllers in practice.
Given their liability for drivers' safety and mobility, governmental transportation agencies are conservative in requiring that signal controllers are interpretable and regulated. Consequently, this paper focuses on defining and studying self--optimizing, regulatable signal control policies.

The following contributions are made, which, to the best of our knowledge, were not addressed in previous work:

\noindent
\begin{enumerate}

\item Define and justify a regulatable control function for the signal control domain.
    
\item Define and study the effectiveness of a domain specific regulatable function when compared to a DNN--based policy.

\item Study the effectiveness of different optimization methods for online training of a signal control policy. Namely, \textit{Covariance Matrix Adaptation Evolution Strategy} (\textsc{cma--es})~\cite{hansen2001completely}, \textit{Proximal Policy Optimization} (PPO)~\cite{schulman2017proximal}, and \textit{Deep Q--learning} (DQN)~\cite{mnih2015human}.

\item Develop three variants of the DQN algorithm that utilize and train a regulatable control function. These variants are denoted \textit{Deep Regulatable Q--learning} (DRQ), \textit{deep regulatable softmax Q-learning} (DRSQ), and \textit{deep regulatable hardmax Q--learning} (DRHQ).

\item Test the performance of the aforementioned optimization methods through computer--based simulation of a real--life intersection and observed demand.

\item Compare the proposed methods with the commonly deployed \textit{actuated signal controller}~\cite{bonneson2011traffic}. As opposed to previous work that were compared to, the less effective, fixed--timing actuation policy. 
\end{enumerate}

We show that a designed regulatable control function can reduce traffic delays by up to 30\% when compared to common actuated signal controllers. Moreover, such a regulatable control function is shown to be competitive with the performance of a DNN based controller.
Next, we turn to study online optimization approaches for signal controllers. Our empirical study shows that a value--based approach (specifically DQN) converges faster, and to a better policy, compared to a policy--gradient approach (specifically PPO). Our regulatable Q--learning variant, DRHQ, is shown to result in a policy that is competitive with commonly deployed actuated signal controllers after a single training episode. The control policy is further improved in successive episodes, reaching up to 19.4\% reduced delays.

\section{Background and Related Work}

This section provides the necessary background which includes the domain description, Markov decision processes, reinforcement learning, and related work.


\subsection{The traffic signal domain}

A signalized intersection is composed of incoming and outgoing roads where each road is affiliated with one or more lanes. 
Each signalized intersection is assigned a set of phases, $\Phi$. Each phase, $\varphi \in \Phi$, is affiliated with a specific traffic movement through the intersection, as illustrated in Figure~\ref{fig:phases}.
Two phases are defined to be in \textit{conflict} if they cannot be enabled simultaneously (their affiliated traffic movement is intersecting). 
For example, in the phase allocation presented in Figure~\ref{fig:phases}, $\varphi_2 \text{ and } \varphi_1$ are conflicting phases.

\begin{center}
\begin{figure}[t]
    \includegraphics[trim=0 0 0 0, clip, width=0.85\columnwidth]{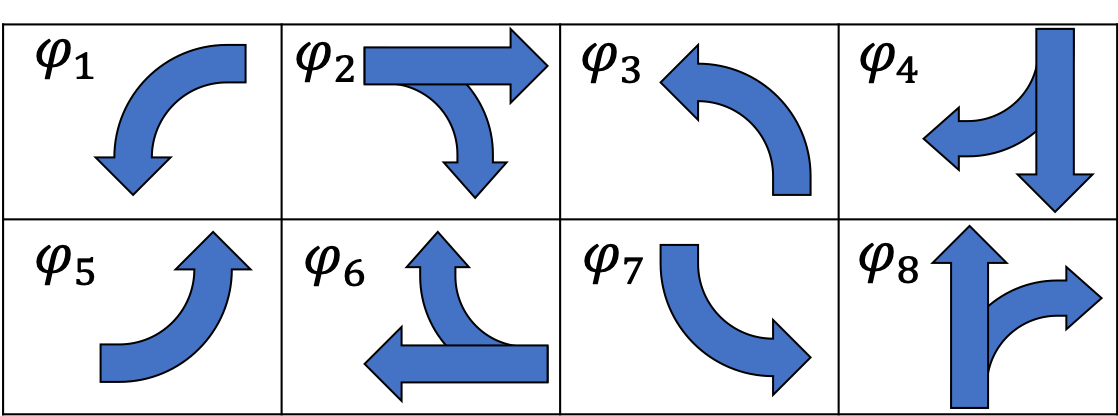}
    \caption{Common phase allocation for a 4--way intersection.}
    \label{fig:phases}%
\end{figure}
\end{center}
\subsection{Reinforcement learning}

In reinforcement learning (RL) an agent is assumed to learn through interactions with the environment. The environment is commonly modeled as a Markov decision process (MDP) which is defined by: $\mathcal{S}$ -- the state space, $\mathcal{A}$ - the action space, $\mathcal{P}(s_t,a,s_{t+1})$ -- the transition function of the form $\mathcal{S} \times \mathcal{A} \times \mathcal{S} \rightarrow Prob$, $R(s,a)$ -- the reward function of the form $\mathcal{S} \times \mathcal{A} \rightarrow \mathbb{R}$, and $\gamma$ -- the discount factor. The agent is assumed to follow an internal policy $\pi$ which maps states to actions, i.e., $\mathcal{S} \rightarrow \mathcal{A}$. The agent's chosen action ($a_t$) at the current state ($s_t$) affects the environment such that a new state emerges ($s_{t+1}$) as well as some reward ($r_t$) representing the immediate utility gained from performing action $a$ at state $s$, given by $R(s,a)$. The observed reward is used to tune the policy such that the expected sum of discounted reward, $J_\pi = \sum_t \gamma^t r_t$, is maximized. The policy $argmax_\pi[J_\pi]$ is the optimal policy denoted $\pi^*$. 

Recent publications~\cite{van2016coordinated,shabestary2018deep} suggest applying one RL approach, Deep Q--learning (DQN)~\cite{mnih2015human}, for training and operating signal controllers. In DQN, a deep neural network (DNN) is trained to map state--action pairs to a real number denoted the $Q$--value i.e., $\mathcal{S} \times \mathcal{A} \rightarrow \mathbb{R}$. The $Q$--value for a given state--action pair, $(s,a)$, represents the sum of discounted reward after performing $a$ at $s$ and then following the optimal policy ($\pi^*$). Algorithm 1 presents the pseudocode for DQN (for now, ignore lines in red i.e., Lines~\ref{ln:init_G},~\ref{ln:set_aG},~\ref{ln:begin_forn}--\ref{ln:end_forn}).   
For each training episode, DQN sets the initial state in Line \ref{ln:init_seq}. Next, for each time step within an episode, an action is chosen in an epsilon greedy manner (Lines 8--9) where the greedy action is the one with the maximal $Q$--value as approximated by the Q--function. The chosen action is executed and the outcome (immediate reward and next state) is stored in the replay memory (Lines 11--12). Next, a minibatch of transitions (state, action, immediate reward, next state) is sampled and the Q--function weights ($\theta$) are updated such that the squared error from the target $Q$--value, $y_j$, is reduced (Lines 13--15). The target $Q$--value ($y_j$) is computed using the temporal difference propagated from the next state~\cite{singh1996reinforcement}. Finally (Line \ref{ln:tar_upd}), every $C$ steps, the target Q--function ($\hat{Q}$) is set to equal the trained Q--function ($Q$). Doing such a delayed update is justified by Mnih et al.~ \cite{mnih2015human} as a way to reduce oscillations or divergence of the policy.

\subsection{Related work}

Previous works have examined reinforcement learning algorithms for online optimization of signal controllers. Unfortunately, the applicability of these protocols is questionable due to various drawbacks: \textbf{(a)} long and unsafe tuning process~\cite{van2016coordinated,shabestary2018deep,liang2018deep,mousavi2017traffic,genders2016using}
, \textbf{(b)} cumbersome policies that cannot be easily interpreted and regulated (mostly relying on deep artificial neural networks) thereby providing limited liability~\cite{van2016coordinated,shabestary2018deep,liang2018deep,mousavi2017traffic,genders2016using,wei2018intellilight,li2016traffic}
, \textbf{(c)} limited scalability (with regards to the number of managed phases)~\cite{van2016coordinated,wei2018intellilight,mousavi2017traffic,li2016traffic}
, \textbf{(d)} reliance on (currently) unrealistic traffic sensing capabilities~\cite{wei2018intellilight,mousavi2017traffic}, \textbf{(e)} experiments evaluated on unrealistic scenarios such as synthetic traffic demand~\cite{mousavi2017traffic,bakker2010traffic,genders2016using,li2016traffic,liang2018deep,abdoos2011traffic,prabuchandran2014multi}
, and handling only through traffic (no turning vehicles)~\cite{mousavi2017traffic,li2016traffic}.

A line of previous work
~\cite{wang2019large,laval2019large,prabuchandran2014multi, aslani2018traffic,van2016coordinated} focused on controlling and coordinating a set of signal controllers. The goal of such work is optimizing traffic flow over a road network that consists of several signalized intersections. Such a multiagent control manifests a combinatorial action space which results in limited scalability as well as slow and inefficient learning.

While some of the aforementioned publications presented compelling results, none yielded a control function that is human interpretable and liable. Given liability and regularity constraints faced by governmental transportation agencies, signal control functions will most likely not be adopted unless they can be interpreted and regulated.

Interpretability of deep reinforcement learning has recently been examined. Similar to this work, techniques for mimicking deep neural networks with  interpretable actuation were suggested. One work~\cite{liu2018toward} suggests using a variant of \textit{continuous U trees}, however this approach requires that the affiliated deep Q--network be trained to convergence initially. As a result, the control policy remains uninterpretable for the duration of the training episodes. Consequently, such an approach cannot be utilized in safety critical domains that require online learning.
This shortcoming is shared by other works \cite{hein2018interpretable, verma2018programmatically}, each requiring learning a model that is based on a deep neural net prior to utilizing an interpretable controller.

\section{Problem Definition}
\label{sec:prob-definition}

The focus of this paper is around online optimization of a human interpretable signal controller. Following previous work ~\cite{shabestary2018deep, wei2018intellilight, prashanth2011reinforcement} we model this problem as an MDP. The state space is defined by the possible input assignments. The set of available actions per state is defined by the possible output assignments and constraints. Constraints are considered as part of the transition function of the environment. For example, actions do not include yellow lights, these are set automatically when required for safety. The transition function is provided by the environment. Reward is defined by reduction in accumulated traffic delay. The controller's operation is defined as follows:

\noindent 
\textbf{Input:} \textbf{(1)} current signal assignment (green, yellow, and red assignment for each phase), for each lane: \textbf{(2)} number of approaching vehicles, \textbf{(3)} stopped vehicles accumulated waiting time, \textbf{(4)} number of stopped vehicles, and \textbf{(5)} average speed of approaching vehicles. 
Note that all inputs are necessarily non--negative.

\noindent 
\textbf{Output:} next signal assignment for a duration of time equal to a given minimum phase length. Signal assignments are abstracted to phases which group individual assignments into traffic movements. Non-conflicting pairs of phases give a complete signal assignment.

\noindent 
\textbf{Constraints:} \textbf{(1)} right--of--passage cannot be assigned to conflicting phases, \textbf{(2)} a yellow signal must appear for a predefined time interval between red and green signals. 

\noindent 
\textbf{Assumptions: }
\textbf{(1)} the intersection's layout is known, i.e., incoming/outgoing lanes, phases, and conflicting phases, \textbf{(2)} real--time sensing of incoming traffic as specified in the problem input.
These sensing assumptions are reasonable given latest advances in traffic sensing technology, namely, radar~\cite{samczynski2011concept}, Wi--Fi scanning~\cite{kostakos2013traffic}, drivers' smartphones~\cite{mohan2008nericell}, connected vehicles~\cite{wan2016mobile}, image processing~\cite{bautista2016convolutional}, infrared sensing~\cite{iwasaki2011robust}, and Synthetic Aperture Radar (SAR) satellites~\cite{meyer2006performance}.

\noindent 
\textbf{Desiderata: }
The signal controller is expected to assign right--of--passage such that: \textbf{(1)} the average delay suffered by incoming vehicles is minimized \textbf{(2)} the actuation policy can be interpreted and regulated by a human (a precise definition is given next).

\subsection*{Regulatable signal controller policy}

Control of safety critical tasks in general, and signal control specifically, require interpretable policies for liability and performance guarantee purposes. Unfortunately, there is little agreement on the meaning of interpretability. Ahmad et al~\citeyear{ahmad2018interpretable} state that ``The choice of interpretable models depends upon the application and use case for which explanations are required". Consequently, this section provides a definition for an interpretable model for the signal domain. 

A signal control policy is defined as, $\pi(s;\theta) \rightarrow (\Phi^g, \Phi^y, \Phi^r )$, mapping a given traffic state, $s$, and a set of parameters, $\theta$, to a signal assignment, that is three sets of phases representing green, yellow, and red signal assignments, $\Phi^g,~ \Phi^y,~ \Phi^r$, respectively. A valid signal assignment is one that results in no conflicting traffic movements.  The traffic state, $s$, is defined by the provided sensors input. The control function's parameters, $\theta$, should be tuned such that the control policy yields optimized performance. 

We define a precedence function, $g(s,\Phi;\theta') \rightarrow \mathbb{R}$, mapping a given traffic state, $s$, a set of parameters, $\theta'$, and a non--conflicting set of signal phases, $\Phi$, to a real number representing the precedence of assigning right--of--passage to $\Phi$. Note that a $Q$-function~\cite{mnih2015human} could serve as a precedence function but a precedence function is not necessarily a $Q$-function.
Such a precedence function suggests a control policy where the chosen signal assignment in state $s$, is ${\text{argmax}_{\Phi}}g(s,\Phi;\theta')$. A series of precedence functions, $G$, one for each action, defines a full order over all phase assignments. $\theta'$ is tuned such that the action with the highest precedence is the optimal action in expectation. 

\begin{defn}[Regulatable precedence function] \label{def:regulatable}
A precedence function $g$ is defined as regulatable, if for all state variables $s[i] \in s$, $\frac{\partial g}{\partial s[i]}$ exists and is either non--negative or non--positive for any possible assignment of $s$, i.e., the precedence function is monotonic in the state variables.
\end{defn} 

A control policy that is based on a regulatable precedence function is defined as a \textit{regulatable control policy}.
For such a policy, changes to the signal assignment can be intuitively interpreted as following changes in state variables, e.g., the right--of--passage was revoked from $\varphi_2$ and granted to $\varphi_4$ (as defined in Figure~\ref{fig:phases}) because the number of stopped Southbound vehicles increased while the number of such Eastbound vehicles decreased. Moreover, should policy adjustment be required, adding a weighting parameter to each state variable allows for intuitive tuning of the control function with regards to the specific state variable. 

Consider the example in Figure~\ref{fig:interp}, traffic has accumulated on the Eastbound left--turn lane, however the light configuration has not yet been switched. A simple regulatable control function can be optimized which chooses between two configurations, allowing Westbound traffic, or allowing Eastbound traffic. In this example, the precedence function is a simple summation of 4 input variables (Queue length, number of approaching vehicles, accumulated waiting time for stopped vehicles, and average speed). Each of these variables is affiliated with one tunable parameter. E.g., ``W-Through Queued" specifies the weight factor affiliated with the queue length variable for the Westbound trough lanes. We later generalize and discuss this type of controller in greater detail.
The table at the bottom of the figure specifies optimized values for the different tunable parameters. 
By inspecting the values assigned we can observe that the speed of approaching vehicles on the Westbound left--turn lane is the primary factor in the decision to maintain the current light configuration. However, it might be the case that the Eastbound left--turn lanes are too short to accommodate the typical number of vehicles. The parameters can be easily tuned to give higher precedence to clearing traffic from these lanes by increasing the weight parameter E-Left Queued.

\begin{figure}[h]
      \subfloat{  \includegraphics[width=0.9\columnwidth]{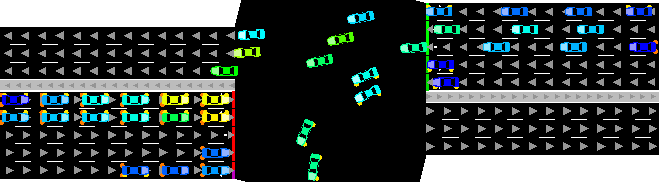}}
    \\ \
    	\subfloat{\small
    	\centering
    	\begin{tabular}{|l|r|r|r|r|r|}
    	    \hline
    		Action & Queued & Approaching & Waiting (s) & Speed & Value\\
    		\hline
    		W--Through & 0.0 & 2.1 & 0.0 & 0.33 &\\
    		W--Left & 3.64 & 0.77 & 4.65 & 11.04 & 22.53\\
    		\hline
    		E--Through & 1.25 & 0.0 & 0.73 & 0.0 &\\
    		E--Left & 5.32 & 1.67 & 10.34 & 0.0 & 19.31\\
    		\hline
        \end{tabular}
    	\normalsize}
	\caption{Optimized actuation policy for the presented intersection (top). The policy is based on a linear control function that assigns values to input variables presented in the table (bottom).}
	
	\label{fig:interp}
\end{figure}

This work makes the assumption that a regulatable control function is interpretable (a similar assumption was made in~\cite{hein2018interpretable}).

\section{Designing a Regulatable Precedence Function}



Based on the problem input (as defined in the problem definition) we define the following, phase dependent state variables, $s_\varphi[1,...,6]$: (1) number of stopped vehicles, (2) number of approaching vehicles, (3) cumulative stopped time, (4) average stopped time, (5) average queue length = stopped vehicles divided by the number of lanes, and (6) average speed for approaching vehicles. These phase dependent variables relate only to vehicles that are present on lanes served by phase $\varphi$. For example, $s_{\varphi_1}$ returns the number of stopped vehicles on the Eastbound left--turning lanes.
On top of these state variables, the proposed precedence function penalizes phase assignments that enforce clearance intervals. For instance, activating (assigning right--of--passage) the Southbound through phase straight after an active Eastbound through phase commonly requires a clearance interval. The clearance interval activation and duration is intersection dependent. For general guidelines see~\cite{bonneson2011traffic}.
Transitioning from a currently active phase set to another phase set, $\Phi$, triggers one of the following cases. 
\begin{enumerate}
\item \textbf{Full clearance} -- an interval where no phase is active (all red).
\item \textbf{Partial clearance} -- an interval where part of the phases in $\Phi$ are inactive (prior to becoming active).
\item \textbf{Permissive clearance} -- a short interval (shorter than in the partial clearance case) where part of the phases in $\Phi$ are inactive. Commonly due to a currently active permissive--left phase (when applicable).
\item \textbf{No clearance} -- no clearance is required.
\end{enumerate}
Each of these cases is affiliated with a flag, denoted $f_{1,...,4}$, that is set to 1 for the active case and 0 for the others. In a state where two cases are simultaneously active, only the one with the lowest index will be set to 1.



Our proposed precedence function can now be defined as:
\begin{equation} \label{eq:prec}
    g(s,\Phi ; \theta') = \sum_{\varphi \in \Phi} \sum_{i=1}^{6} \left( w^{\Phi\varphi}_i s[i] \right) ^{p^{\Phi\varphi}_i} \cdot \sum_{j=1}^{4} \left( w'^{\Phi}_j f_j \right) ^{p'^{\Phi}_j} 
\end{equation}

Where $\theta' = \{w,p,w',p'\}$.
In total, the designed function is composed of 6 state variables per $\varphi$ in $\Phi$, each with two affiliated tunable parameters, a weight $(w^{\Phi\varphi}_{1...6})$ and an exponent $(p^{\Phi\varphi}_{1...6})$. Also, each of the clearance flags ($f_{1...4}$) is affiliated with two tunable parameters, a weight $(w'^{\Phi}_{1...4})$ and an exponent $(p'^{\Phi}_{1...4})$. Each phase combination ($\Phi$), phase ($\varphi$), and index ($i$ or $j$) defines a unique tunable parameter. 
All together, this function defines 12 parameters per phase within a phase combination ($w^{\Phi\varphi},~p^{\Phi\varphi}$) and 8 parameters per phase combination ($w'^{\Phi},~p'^{\Phi}$). 
In the phase diagram defined in Figure~\ref{fig:phases} there are 8 phases which result in 8 sets of non--conflicting pairs. Namely, $\{(\varphi_1,\varphi_2) \bigtimes (\varphi_5,\varphi_6) \cup (\varphi_3,\varphi_4) \bigtimes (\varphi_7,\varphi_8)\}$. Consequently, the appropriate regulatable function $G$ (over all 8 phase pair precedence functions) will be composed of $8 (2 \cdot 12 + 8) = 256$ tunable parameters.

\begin{lem}
  \label{lem:regulatable}
    The precedence function defined in Equation~\ref{eq:prec} is regulatable according to Definition~\ref{def:regulatable}.
\end{lem}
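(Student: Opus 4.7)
The plan is a direct calculation. Fix an arbitrary state coordinate $s_\varphi[i]$, a phase combination $\Phi$, and a parameter vector $\theta'$, and show that $\partial g/\partial s_\varphi[i]$ exists and has a sign that does not depend on the numerical value of $s$.

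First I would observe that in Equation~\ref{eq:prec}, only the single summand with $\varphi' = \varphi$ and $k = i$ in the outer double sum contains the variable $s_\varphi[i]$, and that the clearance factor $\sum_{j=1}^{4} (w'^{\Phi}_j f_j)^{p'^{\Phi}_j}$ depends only on the flags $f_j$ (which are determined by the current/next phase transition, not by any sensor reading). Denoting this clearance factor by $C_\Phi$, I would treat it as a constant with respect to $s$.

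Applying the power rule then gives
\begin{equation*}
\frac{\partial g}{\partial s_\varphi[i]} \;=\; p^{\Phi\varphi}_i \,\bigl(w^{\Phi\varphi}_i\bigr)^{p^{\Phi\varphi}_i}\, \bigl(s_\varphi[i]\bigr)^{p^{\Phi\varphi}_i - 1} \cdot C_\Phi .
\end{equation*}
Three of these four factors depend only on $\theta'$ and on the clearance flags, so their signs are fixed once $\theta'$ and $\Phi$ are fixed. The remaining factor, $(s_\varphi[i])^{p^{\Phi\varphi}_i - 1}$, is non-negative on its domain because the problem definition guarantees $s_\varphi[i] \geq 0$. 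Consequently the product has one fixed sign for every non-negative assignment of $s$, which is exactly the monotonicity required by Definition~\ref{def:regulatable}. The same argument applies to every state coordinate and every admissible $\Phi$, completing the proof.

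The one delicate point will be existence of the derivative at the boundary $s_\varphi[i] = 0$ when $p^{\Phi\varphi}_i < 1$, where $(s_\varphi[i])^{p-1}$ diverges. I expect to handle this either by restricting the admissible exponents to $p^{\Phi\varphi}_i \geq 1$ (a natural convention for the polynomial family considered here) or by using the one-sided derivative from the right; in either case the sign of the remaining factors is unaffected, so the monotonicity conclusion still holds.
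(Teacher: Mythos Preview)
Your proposal is correct and follows essentially the same route as the paper: differentiate directly, observe that the only state-dependent factor is $(s_\varphi[i])^{p-1}$, which is non-negative because all inputs are non-negative, and conclude that the sign of the derivative is fixed by the remaining $\theta'$- and flag-dependent constants. Your derivative is in fact slightly more careful than the paper's (you retain the full factor $(w^{\Phi\varphi}_i)^{p^{\Phi\varphi}_i}$ arising from the chain rule, whereas the paper writes only $w^{\Phi\varphi}_i$), and you flag the boundary case $s_\varphi[i]=0$ with $p<1$, which the paper does not discuss; neither point changes the argument.
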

\begin{proof} \ \\
$$\frac{\partial g}{\partial s[i]} = w^{\Phi\varphi}_{i} p^{\Phi\varphi}_i s[i]^{p^{\Phi\varphi}_i - 1}  \cdot \sum_{j=1}^{4} \left(w'^{\Phi}_j f_j\right)^{p'^{\Phi}_j}$$
$s[i] \ge 0$ for any $i \in \{1,...,6\}$ (see `input' in problem definition). 
Consequently, $s[i]^{p^{\Phi\varphi}_i - 1} \ge 0$. All other components of the partial derivative i.e.,  $w^{\Phi\varphi}_{i} p^{\Phi\varphi}_i  \cdot \sum_{j=1}^{4} \left(w'^{\Phi}_j f_j\right)^{p'^{\Phi}_j}$, are not dependent on any of the state variables and can be viewed as a constant ($\vartheta$). As a result, $\frac{\partial g}{\partial s[i]}$ would be either non--negative ($\vartheta \ge 0$), non--positive ($\vartheta \le 0$), or both ($\vartheta = 0$),  for any $i,~s$ and a given $\Phi,~\theta'$ assignment.


\end{proof}

The precedence function defined in Equation~\ref{eq:prec} is hereafter denoted the \textit{designed precedence function}. The affiliated control policy which returns $\text{argmax}_\Phi G(s,\Phi; \theta')$ is hereafter denoted the \textit{designed control policy}. Next, we discuss general techniques for online tuning of $\theta'$ such that the performance of any regulatable control policy (and specifically the designed control policy) is optimized.

\begin{algorithm}[t!]
\label{alg:DRQ}
\SetAlgoLined
\DontPrintSemicolon
Initialize replay memory $D$ to capacity $N$\;
Initialize $Q$ function with random weights $\theta$\;
Initialize target $\hat{Q}$ function with weights $\bar{\theta}=\theta$\textcolor{red}{\;
Initialize regulatable function $G$ with weights ${\widetilde{\theta}} = [1,...,1]$} \label{ln:init_G}\;
    \For{\foreps}{
        Initialize $s_1$ with observed state \label{ln:init_seq} \;
    	\For{\fort}{
        	With probability $\epsilon$ select a random action $a_t$ otherwise select:\;
        	~~~~$a_t= \text{argmax}_{a}Q(s_t,a;\theta)$ \# for DQN\textcolor{red}{ \label{ln:set_aQ}\;
        	~~~~$a_t= \text{argmax}_{a}G(s_t,a;\widetilde{\theta})$ \# for regulatable} \label{ln:set_aG}\;
        	
        	Execute action $a_t$ and observe reward $r_t$ and state $s_{t+1}$\;
        	
        	Store transition ($s_t,a_t,r_t,s_{t+1}$) in $D$\;
        	
        	Sample random minibatch of transitions $(s_j,a_j,r_j,s_{j+1})$ from $D$\;
        	
        	Set $y_j=r_j+\gamma\text{max}_{a'}\hat{Q}(s_{j+1},a';\bar{\theta})$\;
        	
        	Perform a gradient descent step on $(y_j-Q(s_j,a_j;\theta))^2$ with respect to $\theta$\;
        	
        	Every C steps reset $\hat{Q}=Q$ \label{ln:tar_upd} \;
        	
        	\BlankLine\color{red}
        	    \For{\forn}{ \label{ln:begin_forn}
            	    Sample random minibatch of transitions $(s_j,a_j,r_j,s_{j+1})$ from $D$ \label{ln:sample} \;
            	    
                	\Case{\textbf{DRQ}}{
                    	Set $y_j=r_j+\gamma\text{max}_{a'}\hat{Q}(s_{j+1},a';\theta)$ \label{ln:drq_bellman} \;
                    	Perform a gradient descent step on $(y_j-G(s_j,a_j;\widetilde{\theta}))^2$ with respect to $\widetilde{\theta}$ \label{ln:drq_opt} \;
                	}
                	\Case{\textbf{DRSQ}}{ \label{ln:drsq}
                	    Set $X_j = \text{softmax}_a\left(Q(s_j,\cdot ~;\theta)\right)$\;
                    	Set $Z_j = \text{softmax}_a\left(G(s_j, \cdot ~;\widetilde{\theta})\right)$\;
                    	Perform a gradient descent step on $-\sum_{a \in A}X_j[a]log(Z_j[a])$  w.r.t. $\widetilde{\theta}$ \label{ln:drsq_opt} \;
                	}
                	\Case{\textbf{DRHQ}}{ \label{ln:drhq}
                	    Set $X_j=\begin{cases}
                	        1 & \text{argmax}_{a'}Q(s_j,a';\theta)\\
                	        0 &  a \in A\setminus \{a'\}
                	    \end{cases}$ \label{ln:drhq_cases} \;
                	    Set $Z_j =  \text{softmax}_a\left(G(s_j, \cdot ~;\widetilde{\theta})\right)$\;
                    	Perform a gradient descent step on $-\sum_{a \in A}X_j[a]log(Z_j[a])$  w.r.t. $\widetilde{\theta}$ \label{ln:drhq_opt} \;
                	}
            	}\label{ln:end_forn}
    	}
    }
	\caption{DQN and \textcolor{red}{3 DRQ variants shown in red}} 
\end{algorithm}

\section{Parameter Tuning}



A line of publications~\cite{wei2018intellilight,liang2018deep,van2016coordinated} reported that the DQN algorithm is particularly suitable for online signal control optimization. Unfortunately, the underlying policy in DQN is not regulatable as it is based on a DNN (Line \ref{ln:set_aQ} in Algorithm 1). In order to bridge this gap, we suggest training a given regulatable policy function of the type $\text{argmax}_a G(s,a;\widetilde{\theta})$, to imitate the Q--network actuation i.e., $\text{argmax}_{a}Q(s,a;\theta)$. A simple approach would be to directly use Q--learning with a function approximator that is defined as the regulatable function. However, doing so was found to make reaching a reasonable policy infeasible under even artificially low demands. Such a simple function approximator is incapable of representing the required intermediate functions for learning action values over an extended period of time.

Consequently, other approaches for leveraging DQN to optimize regulatable policies are considered.
These approaches follow the pseudocode described in Algorithm 1. The lines in red (Lines~\ref{ln:init_G},~\ref{ln:set_aG},~\ref{ln:begin_forn}--\ref{ln:end_forn}) show the required additions on top of the original DQN algorithm. It is important to note that instead of selecting an action according to $\text{argmax}_{a}Q(s,a;\theta)$ (Line \ref{ln:set_aQ}), the regulatable version selects an action according to a regulatable policy, $\text{argmax}_a G(s,a;\widetilde{\theta})$ where $G$ is a previously initialized regulatable function (Line \ref{ln:init_G}). Replacing the actuator in the DQN algorithm (Line \ref{ln:set_aG} in lieu of Line \ref{ln:set_aQ}) is reasonable as DQN is an \textit{off--policy} algorithm, i.e., training the Q--function does not require that the same function interacts with the environment. Moreover, the powerful (DNN based) Q--function approximation that is trained by DQN can be used to train the regulatable function, $G$. Consequently, $G$ is repeatedly trained to minimize the error between $\text{argmax}_a G$ and $\text{argmax}_a Q$. This training is performed at every time step over $N$ random minibatches from the replay memory (Lines \ref{ln:begin_forn},\ref{ln:sample}). Next, we discuss 3 different strategies for training $G$.

\subsection{Deep Regulatable Q--Learning (DRQ)}

DRQ is our basic Regulatable Q--Learning variant where the parameters of the regulatable function $\widetilde{\theta}$ are tuned towards equivalency between $G$ and the $Q$--function (using SGD with a squared loss function over the provided minibatch, Line \ref{ln:drq_opt}). 
This variant follows the fact that if $G(s,a) = Q(s,a)$ for all $(s,a)$ then the required policy equivalency is achieved i.e., $\forall s, \text{argmax}_{a}G(s,a) = \text{argmax}_a Q(s,a)$.

Attempting to tune $\widetilde{\theta}$ such that the regulatable function $G$ would match the DNN--based $Q$--function may not be feasible in many cases. A regulatable function is more constrained and possesses far fewer tunable parameters compared to a DNN. As a result, a DNN is usually able to approximate a much larger set of functions compared to a regulatable approximator. Indeed, our empirical study found that the designed precedence function is very limited in its ability to approximate $Q$--values. 
However, setting the regulatable function, $G$, to imitate the action selection of DQN does not require that $G(s,a) \equiv Q(s,a)$. This understanding leads us to DRQ variants that provide extra flexibility with regards to the function approximated by $G$.

\subsection{Deep Regulatable Softmax Q--Learning (DRSQ)}

In DRSQ the parameters of the regulatable function $\widetilde{\theta}$ are tuned towards \textbf{proportional} equivalency between $G$ and the $Q$--function. This variant follows the fact that if $G(s,\cdot) \propto Q(s,\cdot)$ for all $s$ then the required policy equivalence is achieved i.e., $\forall s, \text{argmax}_{a}G(s,a) = \text{argmax}_a Q(s,a)$.
The proportionality values over all actions are standardized using the softmax function. As a result the SGD applied for tuning $\widetilde{\theta}$ takes a gradient based on the cross--entropy objective i.e., using the log loss function (Lines \ref{ln:drsq}--\ref{ln:drsq_opt}).

\subsection{Deep Regulatable Hardmax Q--Learning (DRHQ)}

DRHQ stems from the understanding that policy equivalency does not require full equivalency or even proportional equivalency between $G$ and the $Q$--function. In fact, for achieving policy equivalency it is sufficient to tune $\widetilde{\theta}$ directly towards $\text{argmax}_{a}$ equivalency between $G$ and the $Q$--function. This can be achieved by setting the target value for a given $s, a$ pair as 1 for $\text{argmax}_a Q(s,a)$ or 0 otherwise (Line \ref{ln:drhq_cases}). Next, SGD is used to tune $\widetilde{\theta}$ according to the log loss between the target values and $\text{softmax}\left(G(s,\cdot)\right)$. 

\subsection{Other tuning approaches}

The covariance matrix adaptation evolutionary strategy (\textsc{cma--es}) algorithm~\cite{hansen2001completely} is known to be a particularly effective parameter tuning approach. \textsc{cma--es} is specifically suitable for domains where the tunable parameters have a continuous value range. Moreover, \textsc{cma--es} is known for having few hyper--parameters with fairly low sensitivity. As a result, it is particularly appealing for testing and validating our designed policy. On the other hand, \textsc{cma--es} may be unsuitable for online tuning due to inefficient data sampling, requiring several full episodes for a single policy update step. Moreover, the erratic exploration of \textsc{cma--es}, though helpful in avoiding local optimums, is less suitable for safety critical domains. 

Another natural candidate for online parameter tuning is the policy gradient approach~\cite{sutton2018reinforcement}. Specifically, the Proximal--Policy Optimization (PPO) algorithm~\cite{schulman2017proximal} is particularly suitable for safety--critical domains as it encourages bounded policy gradient steps. PPO achieves this behaviour by clipping the expected advantage for large policy divergence. Consequently, PPO is expected to result in smooth and steady convergence but, on the other hand, is more prone to settle in a local optimum and achieve sub--optimal performance.

\section{Empirical Study}
The purpose of the empirical study is to evaluate and analyze the performance of the proposed designed control policy along with the affiliated online tuning algorithms. Specifically, the empirical study aims at answering the following questions:\\
\begin{enumerate}
    \item Can the designed control policy approximate a deep Q--learning optimized policy?
    
    \item Is a policy gradient optimization approach suitable for training the designed control policy?
    
    \item How do the regulatable Q--learning variants compare to the state--of--the--art, DQN--based signal controller?
\end{enumerate}



\subsection{Experimental settings}

\begin{figure*}%
    \centering
    \subfloat{{\includegraphics[height=4.3cm]{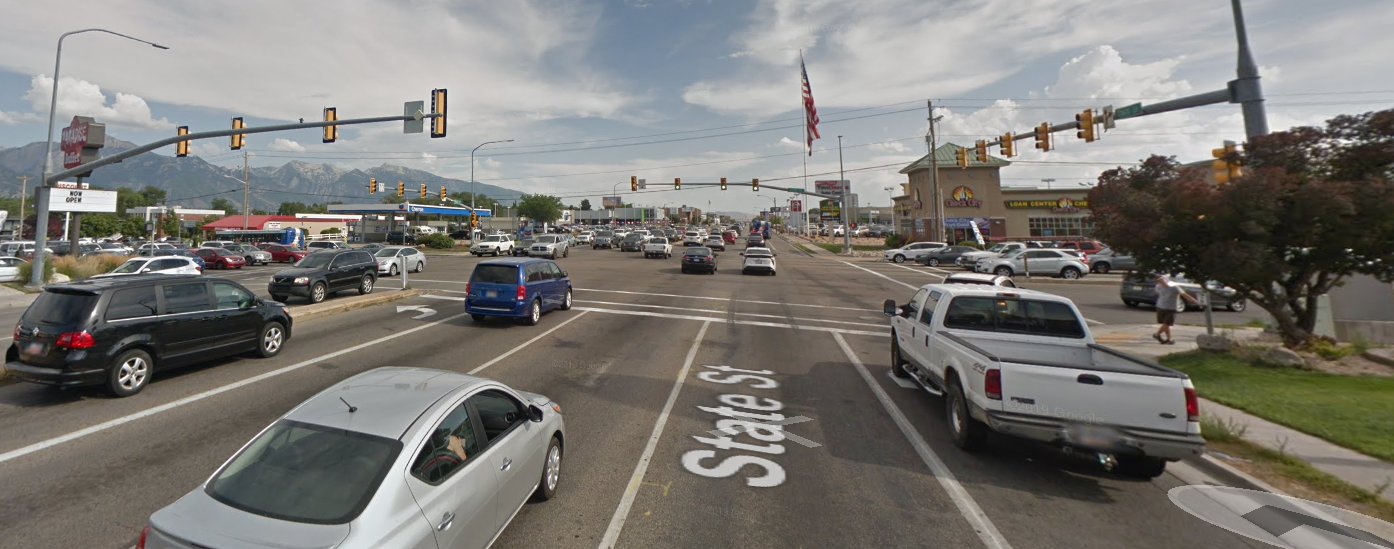} }}%
    \subfloat{{\includegraphics[height=4.3cm]{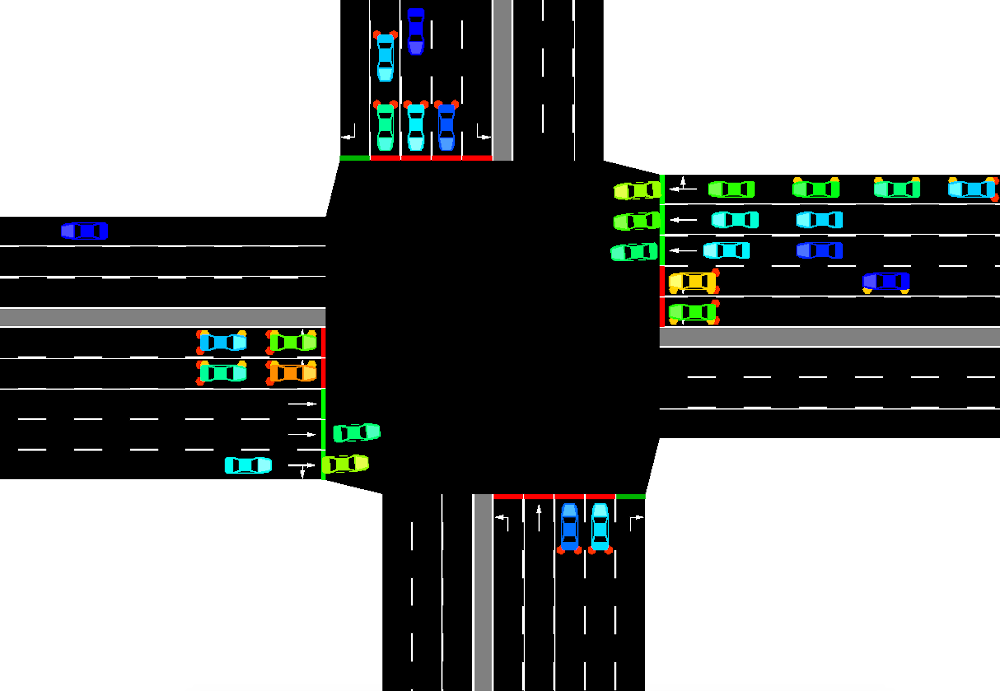} }}%
    \caption{A signalized intersection at State St \& E 4500 S, Murray, Utah (left), (picture credit: Google Maps), and an equivalent intersection modeled in SUMO (right).}%
    \label{fig:UDOT}%
\end{figure*}

The reported experiments rely on a well--established traffic simulator, Simulation of Urban MObility (SUMO)~\cite{behrisch2011sumo}, along with traffic scenarios that are based on real--life observations. The Utah department of transportation (UDOT) provides an open access database (\url{https://udottraffic.utah.gov/ATSPM}) specifying the observed traffic demand for 2092 signalized intersections. The database specifies the number of vehicles affiliated with each incoming, outgoing road combination in 5 minutes aggregation. The demand reported by UDOT is parsed into SUMO where vehicles are spawned with equal probability along equivalent 5 minute intervals. 

\begin{table}
    \label{tab:demands}
	\small
	\centering
	\begin{tabular}{|l|r|r|r|r|}
	\hline
		Demand & \multicolumn{1}{|c|}{Total} & \multicolumn{1}{|c|}{Avg Rate} & Low Rate & High Rate\\
		\hline
		Low & 45,112 & 1.04 & 0.59 & 1.29\\
		Medium & 51,298 & 1.19 & 0.76 & 1.42\\
		High & 61,261 & 1.42 & 0.98 & 1.59 \\
		\hline
    \end{tabular}
	\normalsize
	
	\caption{Traffic demand for three different days representing low, medium, and high traffic volumes. Each day is affiliated with: the total number of approaching vehicles (Total), approaching vehicle per second on average (Avg Rate), during off--peak (Low Rate), and during peak hour (High Rate).}
\end{table}

The reported results relate to a representative major intersection, State St \& E 4500 S, Murray, Utah. Source code for all experiments is available at: \textit{https://github.com/jault/StateStreetSumo}. This intersection is chosen as it is affiliated with high volumes of traffic arriving from two arterial roads. It typically receives more than 50,000 vehicles a day, peaking at 95 cars per minute in rush hour. Figure~\ref{fig:UDOT} provides a snapshot of the simulated intersection (right) and a picture from the actual intersection (left). This intersection is affiliated with 10 phases, the 8 reported in Figure~\ref{fig:phases}, and 2 North and South bound permissive left turns. The minimum phase length is set to 3 seconds for this intersection. These 10 phases form 11 unique pairs of non--conflicting phase combinations. As a result, the affiliated designed control policy has 352 tunable parameters. The UDOT database (Signal \#7157) specifies the affiliated clearance interval's activation and duration.

In order to examine various traffic conditions, demand from 3 different days is chosen for simulation, Wednesday -- May 1, Monday -- May 6, and Friday -- June 21, all of 2019. For each day, a 14 hour time period is considered from 7 A.M. to 9 P.M. These dates were chosen as representative examples of low (May 6th), medium (May 1st), and high (June 21st) traffic volumes. Table 1 presents the total number of vehicles that crossed the intersection on each day as well as the average number of vehicles arriving per second, during off--peak (Low Rate), and peak--hour (High Rate).  

The reported \textsc{cma--es} implementation is based on pycma~\cite{hansen2019pycma}, and the initial variance and population size were chosen to be 0.2 and 12 respectively. The simulated intersection controller is defined as an environment within OpenAI's GYM~\cite{1606.01540}, which gives a standard interface for reinforcement learning. Hyper--parameters for the online algorithms under all demand profiles aside from discount factor are identical throughout the different variants. The discount factor and epsilon were chosen empirically. Other values resulted in similar trends yet yielded slightly worse outcomes. Under low and medium demand the discount factor is set as 0.8. The discount is raised to 0.9 in high demands; this change was found to be important as clearing traffic from a set of lanes requires a larger planning horizon.

For DQN and the regulatable variants, the minibatch size is set to 32 and replay capacity at 100,000 transitions. The epsilon greedy action--selection probability in DQN and its variants is reduced from 0.05 to 0 (full exploit) after 20 training episodes resulting in the observed drop in the graphs. The Q--network is a DNN composed of 3 hidden layers with 64 units each, where the first layer is a 2x2 kernel convolutional layer grouping the input for lanes that belong to the same road. The Huber loss function is used in line with the original DQN work. Leaky ReLU activation is used for all layers along with the Adam optimizer with a step--size of 0.001, and decay rates $\beta_1$ and $\beta_2$ are set as 0.9 and 0.999 respectively. The same parameters were used for both optimizers in the two--stage regulatable variants. The PPO implementation follows the advantage actor--critic paradigm defining the actor by the regulatable function and the critic by a similar neural network as described prior with an alternative objective of estimating state advantages. The step size for each optimizer in this case are 0.001 for the actor and 0.001 for the critic with all other hyper parameters remaining the same. 


Finally, as a baseline for comparison, the results include performance measures for the commonly deployed actuated signal control~\cite{bonneson2011traffic}. This type of controller is provided by the SUMO simulator. Phases for the actuated controller are set in a fixed order, protected lefts followed by through traffic. The maximum green time is set to 300 seconds.

\begin{figure*}%
    \centering
    \subfloat{\includegraphics[width=\textwidth]{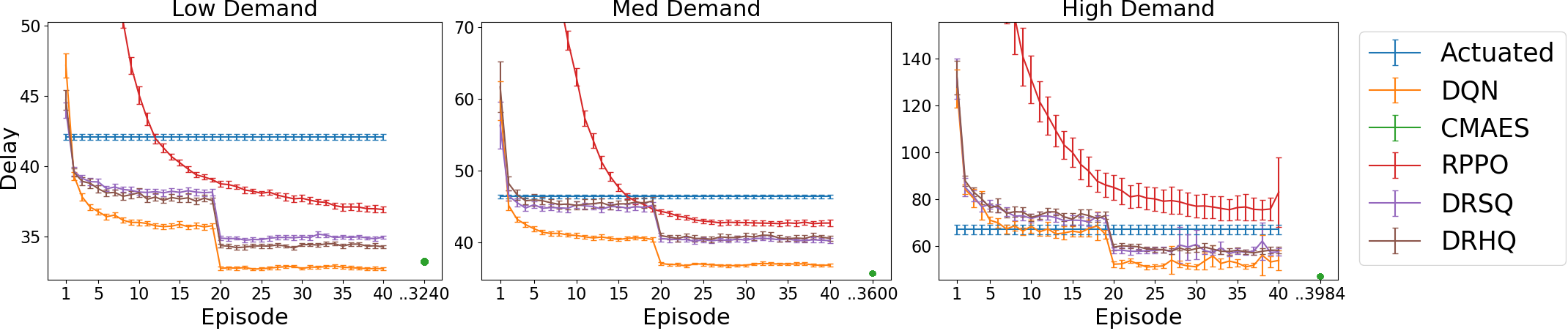}}
    \caption{Average seconds of delay caused by the intersection under each demand profile. In each experiment the 95\% confidence intervals per episode over a population of 30 trials with random seeds is given.}%
    \label{fig:delays}%
\end{figure*}

\subsection{Regulatable control function}

The first set of experiments aim at addressing the question: can the designed control policy approximate a deep Q--learning optimized policy? 

Given the stochastic nature of the signal domain and the combinatorial state space, it is  infeasible to compute the true optimal performance. Instead, we say that the control function can approximate an optimized policy if it results in performance that is competitive with a state--of--the--art, DNN based controller ~\cite{shabestary2018deep}. In order to address this question, a parameter tuning process is applied to the regulatable function. \textsc{cma--es} is chosen as the optimization approach since it is comparatively insensitive to hyper--parameter settings with just the population size and initial variance as hyper--parameters. Further, policy performance can be determined over entire episodes as \textsc{cma--es} is highly parallelizable. Finally, the simulator provides our goal of delay directly when vehicles exit the simulation. This allows for optimization of the designed control policy with respect to the delay rather than approximated through the observed waiting time as real--time algorithms require.

Figure~\ref{fig:delays} presents the average delay suffered by approaching vehicles for each of the three representative days as a function of the training episode (full day of traffic). For now, consider the DQN curve (representing the state--of--the--art, DNN based controller) and the \textsc{cma--es} datapoint (in green). As can be seen, \textsc{cma--es} with the designed regulatable control function achieves competitive delay measurements on all three traffic scenarios. Note that, \textsc{cma--es} is presented as a single data--point (post convergence), as it requires two orders of magnitude more samples in order to converge compared to the other approaches. The full tuning process cannot be fitted on the presented plots.

It is important to note that other, simpler regulatable control functions were also examined, specifically based on Polynomial~\cite{lagoudakis2003least}, and Fourier basis~\cite{konidaris2011value} function approximators. Both yielded extremely poor performance. After over 100 epochs either policy failed to complete the scenario by clearing all vehicles by the end of the simulated time period. The polynomial function showed some improvement, while Fourier basis was stagnant.

Despite its impressive ability to optimize the designed, regulatable function, \textsc{cma--es} is not practical for online optimization for two reasons. (1) Inefficient sampling -- each learning step (update of parameters' mean value) requires 24 episodes (24 full days of traffic). (2) Erratic exploration -- extreme parameters values are sampled leading to unacceptable performance during the tuning process. 
These drawbacks are immediately evident in Figure \ref{fig:cmaes} where the average delay over each epoch is presented. \textsc{cma--es} fails to meet actuated performance for at least 25 epochs (600 episodes) for the medium demand scenario (similar trends were observed for the high and low demand scenarios). The erratic exploration is a hindrance as some solutions entirely fail to complete the scenario or perform much worse than average. Performance doesn't become stable for nearly another 4,000 episodes (11 years of traffic). Nonetheless, these results are still valuable as they provide a promising lower bound estimate on the performance of the regulatable policy (as seen in Figure~\ref{fig:delays}).

\begin{figure}[h]
    \includegraphics[width=0.9\columnwidth]{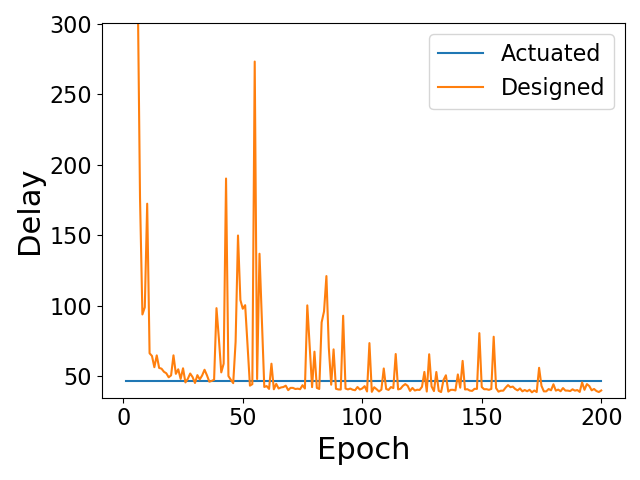}
    \caption{Average seconds of delay caused by the intersection under medium delay optimized by \textsc{cma--es} plotted as the mean of 24 episode epochs.}%
    \label{fig:cmaes}%
\end{figure}

\subsection{Policy gradient approach}

Policy--gradient approaches~\cite{williams1992simple} are known to be especially suitable for optimizing a given policy function while providing some bounds on the exploration rate~\cite{schulman2015trust,schulman2017proximal}. As a result, such algorithms are promising candidates for online tuning of regulatable functions in safety critical domains. 

Figure~\ref{fig:delays} presents the performance curve for tuning the designed control policy using the PPO algorithm. As expected, the learning curve is smooth and monotonic suggesting a safe exploration process. On the other hand, PPO requires about 15 episodes to reach actuated control performance in low and medium demand, while not even reaching actuated level performance in the case of high demand. As a result, such an approach is unlikely to be adopted in practice. Moreover, bounded exploration can cause PPO to converge to suboptimal local optimums as evidenced by the presented results.

\subsection{Q--learning}

Given that the DQN algorithm achieves state--of--the--art results in the signal control domain (yellow curve in Figure~\ref{fig:delays}), the next set of experiments is set to examine how the regulatable Q--learning variants compare. 

Figure~\ref{fig:delays} presents the performance curve for DRSQ (in purple) and DRHQ (in brown). As stated above, vanilla DRQ performs significantly worse and is, thus, not presented in this plot. Both DRSQ and DRHQ outperform PPO and, for low and medium demand, outperform actuated control after a single episode. On the other hand, it takes them 20 episodes to outperform actuated control for high demand (once epsilon is reduced to 0). In order to mitigate the long training time on high demand, Laval et al.~\citeyear{laval2019large} suggested to train the controller on low demand prior to applying it to high demand scenarios. Such an approach is expected to be specifically useful for training regulatable control functions.

DRHQ has a small, yet significant, advantage over DRSQ for low demand while yielding similar performance for medium and high demand. Finally, while DQN outperforms both, it is important to keep in mind that the policy induced by DQN cannot be interpreted and regulated. Consequently, DQN is unlikely to be implemented and serves only as an expected lower bound on regulatable performance.

\section{Summary and Conclusions}

This paper discusses online optimization of interpretable signal control policies. Unlike previous work that based such controllers on deep neural networks, this paper suggests utilizing a policy function that can be interpreted and regulated. A regulatable policy function is defined as one with a monotonic relation between each state variable and the precedence of a given action.
The following conclusions are drawn.

\noindent
\textbf{1.} A regulatable function can approximate an optimized policy in a way that is competitive with a deep neural network.

\noindent
\textbf{2.} A policy gradient approach is not suitable for training a regulatable function in this domain due to slow and sub--optimal convergence.

\noindent
\textbf{3.} A Q--learning approach which trains a regulatable function results in good performance both with regards to convergence speed and the final policy. The regulatable function should be trained to fit the hardmax action as provided by the underlying deep Q--network according to DRHQ.

Future work will examine techniques for warm starting the learning process by observing the operation of a currently deployed controller.


\bibliographystyle{ACM-Reference-Format}  
\bibliography{bibliography}  


\begin{thebibliography}{00}


\ifx \showCODEN    \undefined \def \showCODEN     #1{\unskip}     \fi
\ifx \showDOI      \undefined \def \showDOI       #1{#1}\fi
\ifx \showISBNx    \undefined \def \showISBNx     #1{\unskip}     \fi
\ifx \showISBNxiii \undefined \def \showISBNxiii  #1{\unskip}     \fi
\ifx \showISSN     \undefined \def \showISSN      #1{\unskip}     \fi
\ifx \showLCCN     \undefined \def \showLCCN      #1{\unskip}     \fi
\ifx \shownote     \undefined \def \shownote      #1{#1}          \fi
\ifx \showarticletitle \undefined \def \showarticletitle #1{#1}   \fi
\ifx \showURL      \undefined \def \showURL       {\relax}        \fi
\providecommand\bibfield[2]{#2}
\providecommand\bibinfo[2]{#2}
\providecommand\natexlab[1]{#1}
\providecommand\showeprint[2][]{arXiv:#2}

\bibitem[\protect\citeauthoryear{Abdoos, Mozayani, and Bazzan}{Abdoos
  et~al\mbox{.}}{2011}]%
        {abdoos2011traffic}
\bibfield{author}{\bibinfo{person}{Monireh Abdoos}, \bibinfo{person}{Nasser
  Mozayani}, {and} \bibinfo{person}{Ana~LC Bazzan}.}
  \bibinfo{year}{2011}\natexlab{}.
\newblock \showarticletitle{Traffic light control in non-stationary
  environments based on multi agent Q-learning}. In \bibinfo{booktitle}{{\em
  2011 14th International IEEE conference on intelligent transportation systems
  (ITSC)}}. IEEE, \bibinfo{pages}{1580--1585}.
\newblock


\bibitem[\protect\citeauthoryear{Ahmad, Eckert, and Teredesai}{Ahmad
  et~al\mbox{.}}{2018}]%
        {ahmad2018interpretable}
\bibfield{author}{\bibinfo{person}{Muhammad~Aurangzeb Ahmad},
  \bibinfo{person}{Carly Eckert}, {and} \bibinfo{person}{Ankur Teredesai}.}
  \bibinfo{year}{2018}\natexlab{}.
\newblock \showarticletitle{Interpretable machine learning in healthcare}. In
  \bibinfo{booktitle}{{\em Proceedings of the 2018 ACM International Conference
  on Bioinformatics, Computational Biology, and Health Informatics}}. ACM,
  \bibinfo{pages}{559--560}.
\newblock


\bibitem[\protect\citeauthoryear{Aslani, Seipel, Mesgari, and Wiering}{Aslani
  et~al\mbox{.}}{2018}]%
        {aslani2018traffic}
\bibfield{author}{\bibinfo{person}{Mohammad Aslani}, \bibinfo{person}{Stefan
  Seipel}, \bibinfo{person}{Mohammad~Saadi Mesgari}, {and}
  \bibinfo{person}{Marco Wiering}.} \bibinfo{year}{2018}\natexlab{}.
\newblock \showarticletitle{Traffic signal optimization through discrete and
  continuous reinforcement learning with robustness analysis in downtown
  Tehran}.
\newblock \bibinfo{journal}{{\em Advanced Engineering Informatics\/}}
  \bibinfo{volume}{38} (\bibinfo{year}{2018}), \bibinfo{pages}{639--655}.
\newblock


\bibitem[\protect\citeauthoryear{Bakker, Whiteson, Kester, and Groen}{Bakker
  et~al\mbox{.}}{2010}]%
        {bakker2010traffic}
\bibfield{author}{\bibinfo{person}{Bram Bakker}, \bibinfo{person}{Shimon
  Whiteson}, \bibinfo{person}{Leon Kester}, {and} \bibinfo{person}{Frans~CA
  Groen}.} \bibinfo{year}{2010}\natexlab{}.
\newblock \showarticletitle{Traffic light control by multiagent reinforcement
  learning systems}.
\newblock In \bibinfo{booktitle}{{\em Interactive Collaborative Information
  Systems}}. \bibinfo{publisher}{Springer}, \bibinfo{pages}{475--510}.
\newblock


\bibitem[\protect\citeauthoryear{Bautista, Dy, Ma{\~n}alac, Orbe, and
  Cordel}{Bautista et~al\mbox{.}}{2016}]%
        {bautista2016convolutional}
\bibfield{author}{\bibinfo{person}{Carlo~Migel Bautista},
  \bibinfo{person}{Clifford~Austin Dy}, \bibinfo{person}{Miguel~I{\~n}igo
  Ma{\~n}alac}, \bibinfo{person}{Raphael~Angelo Orbe}, {and}
  \bibinfo{person}{Macario Cordel}.} \bibinfo{year}{2016}\natexlab{}.
\newblock \showarticletitle{Convolutional neural network for vehicle detection
  in low resolution traffic videos}. In \bibinfo{booktitle}{{\em 2016 IEEE
  Region 10 Symposium (TENSYMP)}}. IEEE, \bibinfo{pages}{277--281}.
\newblock


\bibitem[\protect\citeauthoryear{Behrisch, Bieker, Erdmann, and
  Krajzewicz}{Behrisch et~al\mbox{.}}{2011}]%
        {behrisch2011sumo}
\bibfield{author}{\bibinfo{person}{Michael Behrisch}, \bibinfo{person}{Laura
  Bieker}, \bibinfo{person}{Jakob Erdmann}, {and} \bibinfo{person}{Daniel
  Krajzewicz}.} \bibinfo{year}{2011}\natexlab{}.
\newblock \showarticletitle{SUMO--simulation of urban mobility: an overview}.
  In \bibinfo{booktitle}{{\em Proceedings of SIMUL 2011, The Third
  International Conference on Advances in System Simulation}}. ThinkMind.
\newblock


\bibitem[\protect\citeauthoryear{Bonneson, Sunkari, Pratt, Songchitruksa,
  et~al\mbox{.}}{Bonneson et~al\mbox{.}}{2011}]%
        {bonneson2011traffic}
\bibfield{author}{\bibinfo{person}{James Bonneson},
  \bibinfo{person}{Srinivasa~R Sunkari}, \bibinfo{person}{Michael Pratt},
  \bibinfo{person}{Praprut Songchitruksa}, {et~al\mbox{.}}}
  \bibinfo{year}{2011}\natexlab{}.
\newblock \bibinfo{booktitle}{{\em Traffic signal operations handbook}}.
\newblock \bibinfo{type}{{T}echnical {R}eport}. \bibinfo{institution}{Texas.
  Dept. of Transportation. Research and Technology Implementation Office}.
\newblock


\bibitem[\protect\citeauthoryear{Brockman, Cheung, Pettersson, Schneider,
  Schulman, Tang, and Zaremba}{Brockman et~al\mbox{.}}{2016}]%
        {1606.01540}
\bibfield{author}{\bibinfo{person}{Greg Brockman}, \bibinfo{person}{Vicki
  Cheung}, \bibinfo{person}{Ludwig Pettersson}, \bibinfo{person}{Jonas
  Schneider}, \bibinfo{person}{John Schulman}, \bibinfo{person}{Jie Tang},
  {and} \bibinfo{person}{Wojciech Zaremba}.} \bibinfo{year}{2016}\natexlab{}.
\newblock \bibinfo{title}{OpenAI Gym}.
\newblock   (\bibinfo{year}{2016}).
\newblock
\showeprint{arXiv:1606.01540}


\bibitem[\protect\citeauthoryear{Genders and Razavi}{Genders and
  Razavi}{2016}]%
        {genders2016using}
\bibfield{author}{\bibinfo{person}{Wade Genders} {and} \bibinfo{person}{Saiedeh
  Razavi}.} \bibinfo{year}{2016}\natexlab{}.
\newblock \showarticletitle{Using a deep reinforcement learning agent for
  traffic signal control}.
\newblock \bibinfo{journal}{{\em arXiv preprint arXiv:1611.01142\/}}
  (\bibinfo{year}{2016}).
\newblock


\bibitem[\protect\citeauthoryear{Hansen, Akimoto, and Baudis}{Hansen
  et~al\mbox{.}}{2019}]%
        {hansen2019pycma}
\bibfield{author}{\bibinfo{person}{Nikolaus Hansen}, \bibinfo{person}{Youhei
  Akimoto}, {and} \bibinfo{person}{Petr Baudis}.}
  \bibinfo{year}{2019}\natexlab{}.
\newblock \bibinfo{title}{{CMA-ES/pycma} on {G}ithub}.
\newblock \bibinfo{howpublished}{Zenodo, DOI:10.5281/zenodo.2559634}.
  (\bibinfo{date}{Feb.} \bibinfo{year}{2019}).
\newblock
\showDOI{%
\url{https://doi.org/10.5281/zenodo.2559634}}


\bibitem[\protect\citeauthoryear{Hansen and Ostermeier}{Hansen and
  Ostermeier}{2001}]%
        {hansen2001completely}
\bibfield{author}{\bibinfo{person}{Nikolaus Hansen} {and}
  \bibinfo{person}{Andreas Ostermeier}.} \bibinfo{year}{2001}\natexlab{}.
\newblock \showarticletitle{Completely derandomized self-adaptation in
  evolution strategies}.
\newblock \bibinfo{journal}{{\em Evolutionary computation\/}}
  \bibinfo{volume}{9}, \bibinfo{number}{2} (\bibinfo{year}{2001}),
  \bibinfo{pages}{159--195}.
\newblock


\bibitem[\protect\citeauthoryear{Hein, Udluft, and Runkler}{Hein
  et~al\mbox{.}}{2018}]%
        {hein2018interpretable}
\bibfield{author}{\bibinfo{person}{Daniel Hein}, \bibinfo{person}{Steffen
  Udluft}, {and} \bibinfo{person}{Thomas~A Runkler}.}
  \bibinfo{year}{2018}\natexlab{}.
\newblock \showarticletitle{Interpretable policies for reinforcement learning
  by genetic programming}.
\newblock \bibinfo{journal}{{\em Engineering Applications of Artificial
  Intelligence\/}}  \bibinfo{volume}{76} (\bibinfo{year}{2018}),
  \bibinfo{pages}{158--169}.
\newblock


\bibitem[\protect\citeauthoryear{Iwasaki, Kawata, and Nakamiya}{Iwasaki
  et~al\mbox{.}}{2011}]%
        {iwasaki2011robust}
\bibfield{author}{\bibinfo{person}{Yoichiro Iwasaki}, \bibinfo{person}{Shinya
  Kawata}, {and} \bibinfo{person}{Toshiyuki Nakamiya}.}
  \bibinfo{year}{2011}\natexlab{}.
\newblock \showarticletitle{Robust vehicle detection even in poor visibility
  conditions using infrared thermal images and its application to road traffic
  flow monitoring}.
\newblock \bibinfo{journal}{{\em Measurement Science and Technology\/}}
  \bibinfo{volume}{22}, \bibinfo{number}{8} (\bibinfo{year}{2011}),
  \bibinfo{pages}{085501}.
\newblock


\bibitem[\protect\citeauthoryear{Konidaris, Osentoski, and Thomas}{Konidaris
  et~al\mbox{.}}{2011}]%
        {konidaris2011value}
\bibfield{author}{\bibinfo{person}{George Konidaris}, \bibinfo{person}{Sarah
  Osentoski}, {and} \bibinfo{person}{Philip Thomas}.}
  \bibinfo{year}{2011}\natexlab{}.
\newblock \showarticletitle{Value function approximation in reinforcement
  learning using the Fourier basis}. In \bibinfo{booktitle}{{\em Twenty-fifth
  AAAI conference on artificial intelligence}}.
\newblock


\bibitem[\protect\citeauthoryear{Kostakos, Ojala, and Juntunen}{Kostakos
  et~al\mbox{.}}{2013}]%
        {kostakos2013traffic}
\bibfield{author}{\bibinfo{person}{Vassilis Kostakos}, \bibinfo{person}{Timo
  Ojala}, {and} \bibinfo{person}{Tomi Juntunen}.}
  \bibinfo{year}{2013}\natexlab{}.
\newblock \showarticletitle{Traffic in the smart city: Exploring city-wide
  sensing for traffic control center augmentation}.
\newblock \bibinfo{journal}{{\em IEEE Internet Computing\/}}
  \bibinfo{volume}{17}, \bibinfo{number}{6} (\bibinfo{year}{2013}),
  \bibinfo{pages}{22--29}.
\newblock


\bibitem[\protect\citeauthoryear{Lagoudakis and Parr}{Lagoudakis and
  Parr}{2003}]%
        {lagoudakis2003least}
\bibfield{author}{\bibinfo{person}{Michail~G Lagoudakis} {and}
  \bibinfo{person}{Ronald Parr}.} \bibinfo{year}{2003}\natexlab{}.
\newblock \showarticletitle{Least-squares policy iteration}.
\newblock \bibinfo{journal}{{\em Journal of machine learning research\/}}
  \bibinfo{volume}{4}, \bibinfo{number}{Dec} (\bibinfo{year}{2003}),
  \bibinfo{pages}{1107--1149}.
\newblock


\bibitem[\protect\citeauthoryear{Laval and Zhou}{Laval and Zhou}{2019}]%
        {laval2019large}
\bibfield{author}{\bibinfo{person}{Jorge~A Laval} {and} \bibinfo{person}{Hao
  Zhou}.} \bibinfo{year}{2019}\natexlab{}.
\newblock \showarticletitle{Large-scale traffic signal control using machine
  learning: some traffic flow considerations}.
\newblock \bibinfo{journal}{{\em arXiv preprint arXiv:1908.02673\/}}
  (\bibinfo{year}{2019}).
\newblock


\bibitem[\protect\citeauthoryear{Levinson}{Levinson}{1998}]%
        {levinson1998speed}
\bibfield{author}{\bibinfo{person}{David~M Levinson}.}
  \bibinfo{year}{1998}\natexlab{}.
\newblock \showarticletitle{Speed and delay on signalized arterials}.
\newblock \bibinfo{journal}{{\em Journal of Transportation Engineering\/}}
  \bibinfo{volume}{124}, \bibinfo{number}{3} (\bibinfo{year}{1998}),
  \bibinfo{pages}{258--263}.
\newblock


\bibitem[\protect\citeauthoryear{Li, Lv, and Wang}{Li et~al\mbox{.}}{2016}]%
        {li2016traffic}
\bibfield{author}{\bibinfo{person}{Li Li}, \bibinfo{person}{Yisheng Lv}, {and}
  \bibinfo{person}{Fei-Yue Wang}.} \bibinfo{year}{2016}\natexlab{}.
\newblock \showarticletitle{Traffic signal timing via deep reinforcement
  learning}.
\newblock \bibinfo{journal}{{\em IEEE/CAA Journal of Automatica Sinica\/}}
  \bibinfo{volume}{3}, \bibinfo{number}{3} (\bibinfo{year}{2016}),
  \bibinfo{pages}{247--254}.
\newblock


\bibitem[\protect\citeauthoryear{Liang, Du, Wang, and Han}{Liang
  et~al\mbox{.}}{2018}]%
        {liang2018deep}
\bibfield{author}{\bibinfo{person}{Xiaoyuan Liang}, \bibinfo{person}{Xunsheng
  Du}, \bibinfo{person}{Guiling Wang}, {and} \bibinfo{person}{Zhu Han}.}
  \bibinfo{year}{2018}\natexlab{}.
\newblock \showarticletitle{Deep reinforcement learning for traffic light
  control in vehicular networks}.
\newblock \bibinfo{journal}{{\em arXiv preprint arXiv:1803.11115\/}}
  (\bibinfo{year}{2018}).
\newblock


\bibitem[\protect\citeauthoryear{Liu, Schulte, Zhu, and Li}{Liu
  et~al\mbox{.}}{2018}]%
        {liu2018toward}
\bibfield{author}{\bibinfo{person}{Guiliang Liu}, \bibinfo{person}{Oliver
  Schulte}, \bibinfo{person}{Wang Zhu}, {and} \bibinfo{person}{Qingcan Li}.}
  \bibinfo{year}{2018}\natexlab{}.
\newblock \showarticletitle{Toward interpretable deep reinforcement learning
  with linear model u-trees}. In \bibinfo{booktitle}{{\em Joint European
  Conference on Machine Learning and Knowledge Discovery in Databases}}.
  Springer, \bibinfo{pages}{414--429}.
\newblock


\bibitem[\protect\citeauthoryear{Meyer, Hinz, Laika, Weihing, and Bamler}{Meyer
  et~al\mbox{.}}{2006}]%
        {meyer2006performance}
\bibfield{author}{\bibinfo{person}{Franz Meyer}, \bibinfo{person}{Stefan Hinz},
  \bibinfo{person}{Andreas Laika}, \bibinfo{person}{Diana Weihing}, {and}
  \bibinfo{person}{Richard Bamler}.} \bibinfo{year}{2006}\natexlab{}.
\newblock \showarticletitle{Performance analysis of the TerraSAR-X traffic
  monitoring concept}.
\newblock \bibinfo{journal}{{\em ISPRS Journal of Photogrammetry and Remote
  Sensing\/}} \bibinfo{volume}{61}, \bibinfo{number}{3-4}
  (\bibinfo{year}{2006}), \bibinfo{pages}{225--242}.
\newblock


\bibitem[\protect\citeauthoryear{Mnih, Kavukcuoglu, Silver, Rusu, Veness,
  Bellemare, Graves, Riedmiller, Fidjeland, Ostrovski, et~al\mbox{.}}{Mnih
  et~al\mbox{.}}{2015}]%
        {mnih2015human}
\bibfield{author}{\bibinfo{person}{Volodymyr Mnih}, \bibinfo{person}{Koray
  Kavukcuoglu}, \bibinfo{person}{David Silver}, \bibinfo{person}{Andrei~A
  Rusu}, \bibinfo{person}{Joel Veness}, \bibinfo{person}{Marc~G Bellemare},
  \bibinfo{person}{Alex Graves}, \bibinfo{person}{Martin Riedmiller},
  \bibinfo{person}{Andreas~K Fidjeland}, \bibinfo{person}{Georg Ostrovski},
  {et~al\mbox{.}}} \bibinfo{year}{2015}\natexlab{}.
\newblock \showarticletitle{Human-level control through deep reinforcement
  learning}.
\newblock \bibinfo{journal}{{\em Nature\/}} \bibinfo{volume}{518},
  \bibinfo{number}{7540} (\bibinfo{year}{2015}), \bibinfo{pages}{529}.
\newblock


\bibitem[\protect\citeauthoryear{Mohan, Padmanabhan, and Ramjee}{Mohan
  et~al\mbox{.}}{2008}]%
        {mohan2008nericell}
\bibfield{author}{\bibinfo{person}{Prashanth Mohan}, \bibinfo{person}{Venkata~N
  Padmanabhan}, {and} \bibinfo{person}{Ramachandran Ramjee}.}
  \bibinfo{year}{2008}\natexlab{}.
\newblock \showarticletitle{Nericell: rich monitoring of road and traffic
  conditions using mobile smartphones}. In \bibinfo{booktitle}{{\em Proceedings
  of the 6th ACM conference on Embedded network sensor systems}}. ACM,
  \bibinfo{pages}{323--336}.
\newblock


\bibitem[\protect\citeauthoryear{Mousavi, Schukat, and Howley}{Mousavi
  et~al\mbox{.}}{2017}]%
        {mousavi2017traffic}
\bibfield{author}{\bibinfo{person}{Seyed~Sajad Mousavi},
  \bibinfo{person}{Michael Schukat}, {and} \bibinfo{person}{Enda Howley}.}
  \bibinfo{year}{2017}\natexlab{}.
\newblock \showarticletitle{Traffic light control using deep policy-gradient
  and value-function-based reinforcement learning}.
\newblock \bibinfo{journal}{{\em IET Intelligent Transport Systems\/}}
  \bibinfo{volume}{11}, \bibinfo{number}{7} (\bibinfo{year}{2017}),
  \bibinfo{pages}{417--423}.
\newblock


\bibitem[\protect\citeauthoryear{Prabuchandran, Hemanth, and
  Bhatnagar}{Prabuchandran et~al\mbox{.}}{2014}]%
        {prabuchandran2014multi}
\bibfield{author}{\bibinfo{person}{K.~J. Prabuchandran}, \bibinfo{person}{Kumar
  A.~N. Hemanth}, {and} \bibinfo{person}{Shalabh Bhatnagar}.}
  \bibinfo{year}{2014}\natexlab{}.
\newblock \showarticletitle{Multi-agent reinforcement learning for traffic
  signal control}. In \bibinfo{booktitle}{{\em 17th International IEEE
  Conference on Intelligent Transportation Systems (ITSC)}}. IEEE,
  \bibinfo{pages}{2529--2534}.
\newblock


\bibitem[\protect\citeauthoryear{Prashanth and Bhatnagar}{Prashanth and
  Bhatnagar}{2011}]%
        {prashanth2011reinforcement}
\bibfield{author}{\bibinfo{person}{LA Prashanth} {and} \bibinfo{person}{Shalabh
  Bhatnagar}.} \bibinfo{year}{2011}\natexlab{}.
\newblock \showarticletitle{Reinforcement learning with function approximation
  for traffic signal control}.
\newblock \bibinfo{journal}{{\em IEEE Transactions on Intelligent
  Transportation Systems\/}} \bibinfo{volume}{12}, \bibinfo{number}{2}
  (\bibinfo{year}{2011}), \bibinfo{pages}{412--421}.
\newblock


\bibitem[\protect\citeauthoryear{Samczynski, Kulpa, Malanowski, Krysik,
  et~al\mbox{.}}{Samczynski et~al\mbox{.}}{2011}]%
        {samczynski2011concept}
\bibfield{author}{\bibinfo{person}{P Samczynski}, \bibinfo{person}{K Kulpa},
  \bibinfo{person}{M Malanowski}, \bibinfo{person}{P Krysik}, {et~al\mbox{.}}}
  \bibinfo{year}{2011}\natexlab{}.
\newblock \showarticletitle{A concept of GSM-based passive radar for vehicle
  traffic monitoring}. In \bibinfo{booktitle}{{\em 2011 MICROWAVES, RADAR AND
  REMOTE SENSING SYMPOSIUM}}. IEEE, \bibinfo{pages}{271--274}.
\newblock


\bibitem[\protect\citeauthoryear{Samek, Wiegand, and M{\"u}ller}{Samek
  et~al\mbox{.}}{2017}]%
        {samek2017explainable}
\bibfield{author}{\bibinfo{person}{Wojciech Samek}, \bibinfo{person}{Thomas
  Wiegand}, {and} \bibinfo{person}{Klaus-Robert M{\"u}ller}.}
  \bibinfo{year}{2017}\natexlab{}.
\newblock \showarticletitle{Explainable artificial intelligence: Understanding,
  visualizing and interpreting deep learning models}.
\newblock \bibinfo{journal}{{\em arXiv preprint arXiv:1708.08296\/}}
  (\bibinfo{year}{2017}).
\newblock


\bibitem[\protect\citeauthoryear{Schulman, Levine, Abbeel, Jordan, and
  Moritz}{Schulman et~al\mbox{.}}{2015}]%
        {schulman2015trust}
\bibfield{author}{\bibinfo{person}{John Schulman}, \bibinfo{person}{Sergey
  Levine}, \bibinfo{person}{Pieter Abbeel}, \bibinfo{person}{Michael Jordan},
  {and} \bibinfo{person}{Philipp Moritz}.} \bibinfo{year}{2015}\natexlab{}.
\newblock \showarticletitle{Trust region policy optimization}. In
  \bibinfo{booktitle}{{\em International Conference on Machine Learning}}.
  \bibinfo{pages}{1889--1897}.
\newblock


\bibitem[\protect\citeauthoryear{Schulman, Wolski, Dhariwal, Radford, and
  Klimov}{Schulman et~al\mbox{.}}{2017}]%
        {schulman2017proximal}
\bibfield{author}{\bibinfo{person}{John Schulman}, \bibinfo{person}{Filip
  Wolski}, \bibinfo{person}{Prafulla Dhariwal}, \bibinfo{person}{Alec Radford},
  {and} \bibinfo{person}{Oleg Klimov}.} \bibinfo{year}{2017}\natexlab{}.
\newblock \showarticletitle{Proximal policy optimization algorithms}.
\newblock \bibinfo{journal}{{\em arXiv preprint arXiv:1707.06347\/}}
  (\bibinfo{year}{2017}).
\newblock


\bibitem[\protect\citeauthoryear{Shabestary and Abdulhai}{Shabestary and
  Abdulhai}{2018}]%
        {shabestary2018deep}
\bibfield{author}{\bibinfo{person}{Soheil Mohamad~Alizadeh Shabestary} {and}
  \bibinfo{person}{Baher Abdulhai}.} \bibinfo{year}{2018}\natexlab{}.
\newblock \showarticletitle{Deep Learning vs. Discrete Reinforcement Learning
  for Adaptive Traffic Signal Control}. In \bibinfo{booktitle}{{\em 2018 21st
  International Conference on Intelligent Transportation Systems (ITSC)}}.
  IEEE, \bibinfo{pages}{286--293}.
\newblock


\bibitem[\protect\citeauthoryear{Singh and Sutton}{Singh and Sutton}{1996}]%
        {singh1996reinforcement}
\bibfield{author}{\bibinfo{person}{Satinder~P Singh} {and}
  \bibinfo{person}{Richard~S Sutton}.} \bibinfo{year}{1996}\natexlab{}.
\newblock \showarticletitle{Reinforcement learning with replacing eligibility
  traces}.
\newblock \bibinfo{journal}{{\em Machine learning\/}} \bibinfo{volume}{22},
  \bibinfo{number}{1-3} (\bibinfo{year}{1996}), \bibinfo{pages}{123--158}.
\newblock


\bibitem[\protect\citeauthoryear{Sutton and Barto}{Sutton and Barto}{2018}]%
        {sutton2018reinforcement}
\bibfield{author}{\bibinfo{person}{Richard~S Sutton} {and}
  \bibinfo{person}{Andrew~G Barto}.} \bibinfo{year}{2018}\natexlab{}.
\newblock \bibinfo{booktitle}{{\em Reinforcement learning: An introduction}}.
\newblock \bibinfo{publisher}{MIT press}.
\newblock


\bibitem[\protect\citeauthoryear{Tirachini}{Tirachini}{2013}]%
        {tirachini2013estimation}
\bibfield{author}{\bibinfo{person}{Alejandro Tirachini}.}
  \bibinfo{year}{2013}\natexlab{}.
\newblock \showarticletitle{Estimation of travel time and the benefits of
  upgrading the fare payment technology in urban bus services}.
\newblock \bibinfo{journal}{{\em Transportation Research Part C: Emerging
  Technologies\/}}  \bibinfo{volume}{30} (\bibinfo{year}{2013}),
  \bibinfo{pages}{239--256}.
\newblock


\bibitem[\protect\citeauthoryear{Van~der Pol and Oliehoek}{Van~der Pol and
  Oliehoek}{2016}]%
        {van2016coordinated}
\bibfield{author}{\bibinfo{person}{Elise Van~der Pol} {and}
  \bibinfo{person}{Frans~A Oliehoek}.} \bibinfo{year}{2016}\natexlab{}.
\newblock \showarticletitle{Coordinated deep reinforcement learners for traffic
  light control}.
\newblock \bibinfo{journal}{{\em Proceedings of Learning, Inference and Control
  of Multi-Agent Systems (at NIPS 2016)\/}} (\bibinfo{year}{2016}).
\newblock


\bibitem[\protect\citeauthoryear{Verma, Murali, Singh, Kohli, and
  Chaudhuri}{Verma et~al\mbox{.}}{2018}]%
        {verma2018programmatically}
\bibfield{author}{\bibinfo{person}{Abhinav Verma},
  \bibinfo{person}{Vijayaraghavan Murali}, \bibinfo{person}{Rishabh Singh},
  \bibinfo{person}{Pushmeet Kohli}, {and} \bibinfo{person}{Swarat Chaudhuri}.}
  \bibinfo{year}{2018}\natexlab{}.
\newblock \showarticletitle{Programmatically interpretable reinforcement
  learning}.
\newblock \bibinfo{journal}{{\em arXiv preprint arXiv:1804.02477\/}}
  (\bibinfo{year}{2018}).
\newblock


\bibitem[\protect\citeauthoryear{Wan, Liu, Shao, Vasilakos, Imran, and
  Zhou}{Wan et~al\mbox{.}}{2016}]%
        {wan2016mobile}
\bibfield{author}{\bibinfo{person}{Jiafu Wan}, \bibinfo{person}{Jianqi Liu},
  \bibinfo{person}{Zehui Shao}, \bibinfo{person}{Athanasios Vasilakos},
  \bibinfo{person}{Muhammad Imran}, {and} \bibinfo{person}{Keliang Zhou}.}
  \bibinfo{year}{2016}\natexlab{}.
\newblock \showarticletitle{Mobile crowd sensing for traffic prediction in
  internet of vehicles}.
\newblock \bibinfo{journal}{{\em Sensors\/}} \bibinfo{volume}{16},
  \bibinfo{number}{1} (\bibinfo{year}{2016}), \bibinfo{pages}{88}.
\newblock


\bibitem[\protect\citeauthoryear{Wang, Ke, Qiao, and Chai}{Wang
  et~al\mbox{.}}{2019}]%
        {wang2019large}
\bibfield{author}{\bibinfo{person}{Xiaoqiang Wang}, \bibinfo{person}{Liangjun
  Ke}, \bibinfo{person}{Zhimin Qiao}, {and} \bibinfo{person}{Xinghua Chai}.}
  \bibinfo{year}{2019}\natexlab{}.
\newblock \showarticletitle{Large-scale Traffic Signal Control Using a Novel
  Multi-Agent Reinforcement Learning}.
\newblock \bibinfo{journal}{{\em arXiv preprint arXiv:1908.03761\/}}
  (\bibinfo{year}{2019}).
\newblock


\bibitem[\protect\citeauthoryear{Wei, Zheng, Yao, and Li}{Wei
  et~al\mbox{.}}{2018}]%
        {wei2018intellilight}
\bibfield{author}{\bibinfo{person}{Hua Wei}, \bibinfo{person}{Guanjie Zheng},
  \bibinfo{person}{Huaxiu Yao}, {and} \bibinfo{person}{Zhenhui Li}.}
  \bibinfo{year}{2018}\natexlab{}.
\newblock \showarticletitle{Intellilight: A reinforcement learning approach for
  intelligent traffic light control}. In \bibinfo{booktitle}{{\em Proceedings
  of the 24th ACM SIGKDD International Conference on Knowledge Discovery \&
  Data Mining}}. ACM, \bibinfo{pages}{2496--2505}.
\newblock


\bibitem[\protect\citeauthoryear{Williams}{Williams}{1992}]%
        {williams1992simple}
\bibfield{author}{\bibinfo{person}{Ronald~J Williams}.}
  \bibinfo{year}{1992}\natexlab{}.
\newblock \showarticletitle{Simple statistical gradient-following algorithms
  for connectionist reinforcement learning}.
\newblock \bibinfo{journal}{{\em Machine learning\/}} \bibinfo{volume}{8},
  \bibinfo{number}{3-4} (\bibinfo{year}{1992}), \bibinfo{pages}{229--256}.
\newblock


\end{thebibliography}

\end{document}